\documentclass{article}

\usepackage{microtype}
\usepackage{graphicx}
\usepackage{subfigure}
\usepackage{booktabs}
\usepackage{hyperref}

\usepackage{changepage}

\usepackage[preprint]{icml2026}

\usepackage{amsmath}
\usepackage{amssymb}
\usepackage{mathtools}
\usepackage{amsthm}
\usepackage{stmaryrd}

\usepackage[capitalize,noabbrev]{cleveref}

\newcommand{\sem}[1]{[\![#1]\!]}

\theoremstyle{plain}
\newtheorem{theorem}{Theorem}[section]
\newtheorem{proposition}[theorem]{Proposition}
\newtheorem{lemma}[theorem]{Lemma}

\theoremstyle{definition}
\newtheorem{definition}[theorem]{Definition}

\theoremstyle{remark}
\newtheorem{remark}[theorem]{Remark}

\usepackage[textsize=tiny]{todonotes}
\usepackage{bbm}

\icmltitlerunning{Natural Language Should Not Fully Replace Formal Languages}

\begin{document}

\twocolumn[
\icmltitle{Position: Natural Language Should Not Fully Replace Formal Languages}

\icmlsetsymbol{equal}{*}

\begin{icmlauthorlist}
\icmlauthor{Eitan Wagner}{yyy,comp}
\icmlauthor{Elisha Rosensweig}{comp}
\icmlauthor{Omri Abend}{yyy}
\end{icmlauthorlist}

\icmlaffiliation{yyy}{Department of Computer Science, Hebrew University of Jerusalem}
\icmlaffiliation{comp}{Dicta}

\icmlcorrespondingauthor{Eitan Wagner}{eitan.wagner@mail.huji.ac.il}

\icmlkeywords{Machine Learning, Natural Language Processing, Rate-Distortion Theory}

\vskip 0.3in
]

\printAffiliationsAndNotice{\icmlEqualContribution}

\begin{abstract}
Recent advances in large language models and their widespread adoption have prompted claims that natural language could entirely replace formal languages, such as programming languages, for software design. In this position paper, we argue that this perspective overlooks fundamental linguistic properties of natural language, specifically that it is optimized for underspecification in open-ended contexts. We introduce a formal framework centered on \textit{task specificity}, defining it as the information-theoretic reduction of uncertainty---in an output space, such as all possible images---given a user's specific requirements. We prove a \textit{specificity crossover theorem}, showing the existence of a threshold beyond which the cost to express formal requirements into natural language exceeds the cost of direct formal specification. By analyzing case studies across modalities, such as image generation, code synthesis, and audio production, we demonstrate that natural language excels at low specificity tasks, while formal languages are advantageous on tasks with stricter requirements. We conclude that natural and formal languages are complementary tools and advocate the development of hybrid systems that allow users to move across the specificity spectrum.
\end{abstract}

\section{Introduction} \label{sec:intro}

Large language models (LLMs) are transforming human-technology interaction, raising questions about the future of specialized expertise. As these models demonstrate increasing capabilities, concerns about professional skill displacement intensify. Programming represents a particularly salient case study: code is economically significant, operates in well-defined formal domains, and benefits from abundant training data \citep{jiang2024survey,joel2024survey}. 
AI for coding is massively adopted---$84\%$ reported using AI tools in the Stack Overflow 2025 survey,\footnote{\url{https://survey.stackoverflow.co/2025/ai\#sentiment-and-usage}}
with increasingly positive attitudes toward adoption \citep{eshraghian2025ai}.
Some envision a future where natural language entirely replaces programming languages---a vision of ``The new programming language is called Human'' (Jensen Huang; London Tech, June 2025).\footnote{\url{https://youtu.be/SsNS3Xm9ig4?t=1940}} 
However, current systems face substantial limitations: struggles with domain-specific code \citep{joel2024survey}, difficulties adapting to changing requirements \citep{jiang2024survey}, and poor security \citep{dora2025hiddenrisksllmgeneratedweb}. In general, more developers report active distrust  (46\%) than trust (33\%) in the accuracy of AI tools.\footnote{\url{https://survey.stackoverflow.co/2025/ai\#developer-tools}} These challenges raise questions about whether they represent temporary engineering hurdles or fundamental limitations.

\textbf{Our Position.} We argue that natural language and formal languages (such as the code itself) serve complementary information-theoretic roles, optimized for different points along what we call the \emph{ladder of specification}---a hierarchy we define building on previous research on communication efficiency and ambiguity in natural language \citep{gibson2019efficiency,piantadosi2012communicative}. Natural language addresses \emph{open-endedness}---the challenge of communicating about an unbounded world---through underspecification
\citep{roberts2012information}, which includes \emph{pragmatic underspecification} (where listeners infer the speaker's intent from context) and \emph{indifferent underspecification} (where the speaker genuinely accepts multiple outcomes). Formal languages sacrifice this flexibility for precise description within restricted domains, operating at \emph{full specification} where every relevant detail is explicit. The cost of crossing between these modalities---the \emph{translation gap}---means that neither approach dominates: natural language wins at low task specificity through underspecification, while formal languages win at high task specificity by reducing the risk of errors and avoiding a translation overhead. This complementarity suggests that research should focus not only on formal language generation capabilities but also on effective human-AI collaboration, recognizing when each modality is advantageous.

\subsection{Key Contributions}

Building upon linguistic insights (\S\ref{sec:open_endedness}), we provide an information-theoretic framework (\S\ref{sec:formal}) for modeling tasks as constraint satisfaction with explicit and implicit constraints, while distinguishing types of underspecification along a continuum. We prove that for sufficiently specific tasks---and a formal language without exceedingly high redundancy---natural language descriptions become less efficient than direct specification in the target modality. Our analysis validates trends reported in earlier work on many modalities, including images, code, audio, and text (\S\ref{sec:case_study}).

As a consequence of our analysis (\S\ref{sec:implications}), we argue that development and evaluation of code generation from natural language requires a hybrid approach. 
Consequently, the practical value of end-to-end coding models and agents should also be evaluated in terms of the actual benefit for a workflow, with various hybrid models as baselines, and in both low and high expertise domains.

\section{Communication Efficiency: Open-Endedness and Underspecification} \label{sec:open_endedness}

In this section, we provide linguistic and information-theoretic background regarding underspecification. For the purposes of this paper, we focus on one type of speech act \citep{searle1969speech}: describing---directly or indirectly---a goal to be implemented.

\subsection{Focusing Communication}

Natural language serves as humanity's primary communication tool, optimized for efficient information transfer across diverse contexts \citep{Zipf49}. A fundamental challenge is \emph{open-endedness}: natural language must handle the practically unbounded complexity of the real world through the ability to form novel utterances \citep{hockett1960origin, deacon1998symbolic}. Because language users themselves contribute to this complexity---creating new concepts, cultural practices, and communicative needs---the space of possible meanings is essentially unbounded \citep{tomasello2009cultural}.

The Question Under Discussion framework \citep[QUD; ][]{ginzburg1996dynamics,roberts2012information} addresses this challenge by viewing discourse as organized around implicit questions that focus communication. The QUD determines what aspects of the world are relevant, licensing the omission of irrelevant details. This mechanism enables natural language to address open-endedness: each speech act specifies only what matters for the current communicative goal.

\subsection{The Ladder of Specification}

Within a given domain and QUD, utterances vary in how precisely they constrain the space of acceptable meanings. 
Relaxing constraints over possible meaning is termed \textit{semantic underspecification}. In semantics, multiple frameworks attempt to represent the ``shared'' meaning of a sentence through a single, compact meta-structure \citep{reyle1993dealing,copestake2005minimal}. From a cognitive perspective, this approach aligns with the observation that human listeners frequently maintain an underspecified mental representation of ambiguous input, only committing to a specific interpretation when prompted by pragmatic cues or discourse requirements \citep{poesio1995semantic, frisson2009semantic}.

We characterize this variation through the \emph{ladder of specification} (see Fig.~\ref{fig:ladder}), a hierarchy reflecting different relationships between what is said and what is meant:

\begin{figure}[t]
\centering
\includegraphics[width=0.49\textwidth]{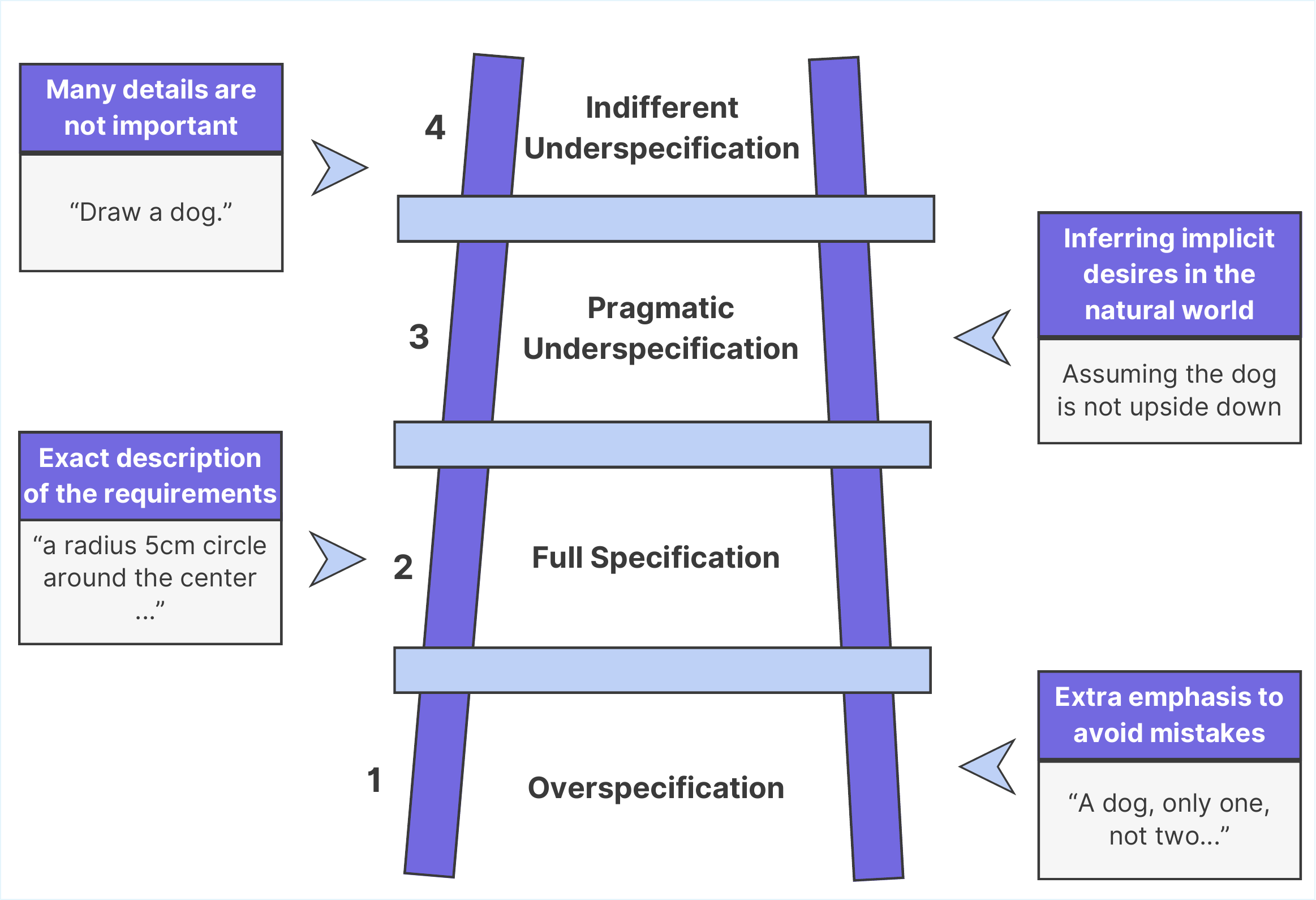}
\caption{The Ladder of Specification: from genuine indifference (top) to overspecification  (bottom), with corresponding compression benefits and error risks.}
\label{fig:ladder}
\end{figure}

\begin{definition}[The Ladder of Specification]
Communication achieves compression through a specification hierarchy:

\begin{enumerate}
\item \textbf{Overspecification}\label{item:overspecification}: More information is provided than strictly necessary, including redundancy or emphasis (e.g., ``a cat, not a dog''). While seemingly inefficient, redundancy can reduce error rates in noisy channels \citep{shannon1948mathematical}. Even seemingly precise utterances like ``at 3 o'clock'' allow some implicit flexibility; adding ``exactly'' reduces the space of possible interpretations, disallowing approximations \citep{lasersohn1999pragmatic}.

\item \textbf{Full Specification}\label{item:full_specification}: Every relevant detail for the current domain is explicitly stated (e.g., code or exact RGB values for colors). At this level, the description of the intent effectively coincides with its implementation. Formal languages typically operate at this level within their domains \citep{meyer1993formalism}.

\item \textbf{Pragmatic Underspecification}\label{item:pragmatic_specification}: The meaning is expected to be inferred from context, where the speaker has intentions beyond what is literally expressed. This level encompasses several distinct situations:
    \begin{itemize}
    \item \emph{Clear intention}: The speaker has a specific outcome in mind but relies on shared knowledge for disambiguation (e.g., ``professional styling'' assumes standard design principles).
    \item \emph{Latent preferences}: The speaker initially has no conscious preference but discovers, through interaction and feedback, that they do care about certain properties. The speaker's constraints often emerge through iterative refinements of their expressed goals in a trial-and-error cycle.
    \item \emph{Delegation}: The speaker does not explicitly formulate preferences for certain details, trusting the listener to make good choices on their behalf (e.g., a manager saying ``handle the deployment'' trusts the engineer's judgment). Importantly, not all choices are acceptable; the speaker adopts the listener's judgment.
    \end{itemize}

\item \textbf{Indifferent Underspecification}\label{item:indifferent_specification}: True indifference to output properties, where multiple interpretations are equally acceptable to the speaker (e.g., ``use any reasonable color scheme''). 
\end{enumerate}
\end{definition}

The key distinction between pragmatic underspecification and indifferent underspecification lies in whether the speaker has (possibly implicit) preferences about the underspecified properties. In pragmatic underspecification, miscommunication represents a failure; in indifferent underspecification, variation is acceptable by design.

Formal languages, such as programming languages, achieve precision by operating primarily at the full specification level within their restricted domains.\footnote{Generally speaking, a formal language is defined by a set of deterministic rules, both for its syntax (i.e., the utterances) and its semantics (i.e., the mapping between utterances and objects). In particular, there is no ambiguity or randomness; an utterance may denote a set of objects, but the mapping itself is not random.}
Importantly, formal languages sometimes exhibit hierarchical abstraction: the command \texttt{x = 1+2} specifies precisely what value is stored but not where in physical memory or how the processor executes the addition. The formal language is deterministic and unambiguous at the outcome level, while delegating implementation details to lower system levels that have a negligible effect on the output.

\subsection{Linguistic Perspectives}

The ladder of specification connects to several research traditions in linguistics and cognitive science.

\paragraph{Pragmatics and Inference.}
The study of how context shapes meaning beyond literal semantics is central to understanding pragmatic underspecification. \citet{grice1975logic} established foundational principles of cooperative communication, where speakers implicate meanings beyond what they literally say. \citet{levinson2000presumptive} provides a comprehensive treatment of how listeners infer pragmatic meaning. \citet{carston2008thoughts} argues that the actual meaning expressed is always much richer than the words used, with listeners filling in substantial implicit content.
\citet{clark1996using} emphasizes the role of ``common ground'' in this inference process. 

The Rational Speech Acts (RSA) framework \citep{frank2012predicting,goodman2016pragmatic} models pragmatic inference as bounded rational reasoning \citep{degen2023rational}, where speakers and listeners reason about likely interpretations given computational constraints. Attempts to formalize listener inference include \textit{default reasoning} in logic \citep{reiter1980logic,mercer1987default}, although others argue that natural situations are too complex for formal logic \citep{cadoli1993survey} and probabilistic methods better describe the cognitive process \citep{oaksford2007bayesian}.

Pragmatic underspecification introduces miscommunication risk. The probability of correctly inferring implicit constraints depends on both the nature of the constraints and the quality of shared context \citep{ferreira2008ambiguity}. 
Crucially, for successful pragmatic inference, the listener must understand the specific speaker's context, knowledge, and preferences \citep{bergen2012speaker}. This quality of \textit{personalization} forms an active research topic in dialogue systems \citep{zadrozny2000natural,oruche2025asurvey}. 

\paragraph{Communication Efficiency.}
A large body of research addresses communication efficiency through the lens of information theory \citep{futrell2022information}, often through rate-distortion theory, which provides a basis for lossy compression \citep{shannon1959coding}.
A general finding is that language optimally balances simplicity (shorter messages) and informativeness \citep{kemp2012kinship,Zaslavsky2017,marzen2017evolution,gibson2019efficiency}. \citet{krifka2007approximate} shows that speakers use imprecise language (such as approximate numbers) as a means of reducing effort when precision is unnecessary. \citet{piantadosi2012communicative} demonstrate that ambiguity serves as a compression device when context enables disambiguation. 

Underspecification in communication is fundamentally related to the notion of \textit{autonomy} given to the listener. We expand on this in Appendix \ref{appendix:autonomy}.

\paragraph{Translation and Specification Mismatch.}
Languages differ in what they require speakers to specify: for example, a language with grammatical gender marking forces gender specification \citep{savoldi2021gender,savoldi2025decade}. To preserve underspecification during translation, clarifications may be necessary, potentially making the translation longer than the original. For instance, describing a nurse's actions without specifying gender is natural in English but requires explicit disambiguation in languages with grammatical gender---or an explicit statement that gender is unspecified.

This phenomenon generalizes beyond natural languages: translating from natural language to formal specification often requires resolving underspecification, either by making constraints explicit or by choosing among alternatives the speaker was indifferent to. An example is mathematical reasoning, where formal notation forces rigor and minimal ambiguity. Fully describing mathematical proofs in natural language will lead to long, and often inconsistent, proofs.

\vspace{-0.1cm}
\paragraph{Compositionality.} \label{sec:compositionality}
\emph{Compositionality}---the principle that the meaning of a complex expression is determined by the meanings of its parts and how they combine \citep{partee1995lexical,pelletier1994principle}---enables efficient specification by allowing constraints to be stated independently. ``A red sports car'' comprises color and type constraints separately, and each component can be identified as the QUD. 

The compositional structure of both natural and formal languages enables their combination: specifications can mix languages, using natural language for high-level intent and formal notation for precise details, with compositional boundaries determining where each is most efficient.

\vspace{-0.1cm}
\section{Formal Framework} \label{sec:formal}

We formalize these intuitions using a constraint satisfaction framework, inspired by Fregean semantics \citep{frege1951concept}, distinguishing between explicit and implicit constraints. We then formalize when natural language loses efficiency to formal specification. Our analysis follows the assumption that shorter utterances are more efficient \citep{goodman2016pragmatic,gibson2019efficiency}.

\subsection{Tasks as Constraint Satisfaction}

We address tasks where the input is a textual description, and the output can be from various modalities (images, code, music, etc.). We refer to the space of possible outputs as a ``world model''---a confined subset of the real world with well-defined structure---which is specified in a formal language. 
Adapting notation from formal logic \citep{partee2012mathematical} and the Rational Speech Act \citep[RSA; ][]{goodman2016pragmatic} terminology,  $U$ denotes the utterance space and $M$ the meaning space.

\begin{definition}[Task and Constraints]
A task $T$, given as textual description $u \in U$, is represented by a tuple $(C_{\text{exp}}, C_{\text{imp}}, M)$ where:
\vspace{-0.1cm}
\begin{itemize}
\vspace{-0.1cm}
\item $M$ is the space of possible outputs (e.g., all valid images, programs, etc.).
\vspace{-0.1cm}
\item $C_{\text{exp}}$ is the set of constraints explicitly specified in $u$.
\vspace{-0.1cm}
\item $C_{\text{imp}}$ is a set of implicit constraints the user intends (i.e., expects in this context) but not specified in $u$. Not satisfying $C_{\text{imp}}$ is considered a failure.
\end{itemize}
\end{definition}
\vspace{-0.1cm}
A constraint is formally an indicator function $c: M \rightarrow \{0, 1\}$ for the set of outputs satisfying the constraint.

\begin{definition}[Denotation]
The \emph{denotation} of a task is defined as:
\begin{align}
    \sem{T} \coloneqq \{y \in M : \forall c \in C_{\text{imp}} \cup C_{\text{exp}}, c(y) = 1\}
\end{align}
\end{definition}

\subsection{The Specification Chain} \label{sec:spec_chain}

We model specification as information flow from the user's intended output to the generated result. The key insight is that natural language and formal language follow different paths with different information-theoretic properties. For our purposes, regard the actual output (or implementation) format as the formal language.

\begin{definition}[Random Variables] \label{def:distributions}
Let $Y$ be a user's envisioned output in meaning space $M$, with prior $p_{M}$, let $T$ be a task defining an equivalence class $\sem{T} \subseteq M$ of acceptable outputs, let $U$ be a natural language description, and let $\hat{Y}$ be the generated output.
\end{definition}

We emphasize that $Y$ is a single instance, corresponding to an envisioned or actual outcome, as opposed to $\sem{T}$, which is a set of acceptable outcomes. We assume $Y \in \sem{T}$, i.e., the user's intent is realizable as a task. Cases where the user envisions an output that no task description can pick out (e.g., self-contradictory) fall outside our framework; in practice, such intents surface as failures during iteration.

\begin{figure}[t]
\centering
\begin{tikzpicture}[node distance=2.3cm, auto, >=latex, thick]
    \node (Y) {\large $Y$};
    \node (T) [right of=Y] {\large $T$};
    \node (U) [right of=T] {\large $U$};
    \node (Yhat) [right of=U] {\large $\hat{Y}$};
    
    \draw[->, blue] (Y) -- node[above, yshift=2pt, font=\small] {\color{blue}constraints} (T);
    \draw[->, blue] (T) -- node[above] {\color{blue}description} (U);
    \draw[->, blue] (U) -- node[above] {\color{blue}generation} (Yhat);
    \draw[->, red, dashed, bend right=30] (Y) to node[below] {\color{red}implementation} (Yhat);
\end{tikzpicture}
\caption{Two specification paths. \textbf{Natural language path} (solid blue): $Y \to T \to U \to \hat{Y}$. The user forms a task $T$ (allowing indifferent compression), describes it in natural language $U$ (allowing pragmatic compression), and the model generates $\hat{Y}$. \textbf{Formal language path} (dashed red): $Y \to \hat{Y}$ via direct specification.}
\label{fig:specification_paths}
\end{figure}

Generating $\hat{Y}$ can follow two paths (Figure~\ref{fig:specification_paths}):
\vspace{-0.1cm}

\paragraph{1. Natural Language Path: $Y \to T \to U \to \hat{Y}$.}
Rather than specifying $Y$ precisely, the user specifies a task $T$ such that any $\hat{Y} \in \sem{T}$ is acceptable, thus applying \emph{indifferent underspecification} (\ref{item:indifferent_specification}). Additionally, the description $U$ states only the explicit constraints of $T$, relying on a model to generate $\hat{Y}$ while satisfying the implicit constraints, thus applying \emph{pragmatic underspecification} (\ref{item:pragmatic_specification}). Importantly, $U$ and $\hat{Y}$ are visible while $Y$ and $T$ are latent. 

\paragraph{2. Formal Language Path: $Y \to \hat{Y}$.}
Formal languages implement $Y$ directly, applying full specification (\ref{item:full_specification}).

The natural language path forms a Markov chain. By the Data Processing Inequality \citep{cover1999elements}:
\begin{equation} \label{eq:dpi}
I(Y; \hat{Y}) \leq I(Y; U) \leq I(Y; T) \leq H(Y)
\end{equation}
Providing bounds on how much information about $Y$ can reach $\hat{Y}$ through the natural language path.

\subsection{Task Specificity}

We now define the central quantity that determines when each path is advantageous.



\begin{definition}[Task Specificity] \label{def:task_specificity_new}
The \emph{task specificity} $\sigma_{\epsilon}(T)$ is defined as the minimal mutual information $I(Y; \hat{Y})$ such that $\hat{Y} \in \sem{T}$ with probability $>1-\epsilon$.
\end{definition}

For simplicity, we will usually write $\sigma(T)$ or $\sigma$. Intuitively, $\sigma$ captures how much the task allows the output to diverge from an example one. 
When $\sigma = 0$, any output is acceptable, and when $\sigma = H(Y)$, only the exact intended output is acceptable. For uniform distributions, this captures exactly the difference in (log) size between the set of possible outputs and the acceptable ones. For example, creative tasks like ``generate a poem'' have low specificity, since a poem the user had in mind is only loosely indicative of a random acceptable poem. In contrast, reproducing a specific image has maximal specificity.

\subsection{The Translation Gap} \label{sec:translation_gap}

Natural language is optimized for open-ended communication across all domains (\S \ref{sec:open_endedness}). When used to describe outputs from a specific domain $M$, this generality incurs a cost.

The expected bits required to describe an output $Y \sim p_M$ using natural language distributed as $p_{\text{NL}}$ is the cross-entropy:
\begin{align}
    H(p_M, p_{\text{NL}}) = H(p_M) + D_{\text{KL}}(p_M \| p_{\text{NL}})
\end{align}

\begin{definition}[Translation Gap]
The \emph{translation gap} is $\Delta_{\text{trans}} = D_{\text{KL}}(p_M \| p_{\text{NL}})$, the KL divergence from the domain distribution to the natural language distribution.
\end{definition}

The translation gap quantifies the cost of using a general-purpose language to describe domain-specific content. It is always non-negative and is zero only when natural language perfectly matches the domain distribution. $\Delta_{\text{trans}}$ thus represents the irreducible inefficiency of using language modeling as compression \citep{deletang2024language}.

As an example, consider reporting the color of a banana. Bananas in images are typically yellow, but ``yellow banana'' is marked (i.e., less common) in language because the modifier is redundant \citep{paik-etal-2021-world}. The translation gap captures this mismatch between visual and linguistic distributions.

The following lemma shows that supporting diverse domains increases the expected translation gap---the more open-endedness, the larger the gap to a random domain.
\begin{lemma} \label{lem:jsd}
Let $\mathcal{D}$ be a set of domain distributions. The expected translation gap satisfies:
\begin{align}
    \mathbb{E}_{p \in \mathcal{D}}[D_{\text{KL}}(p \| p_{\text{NL}})] = D_{\text{KL}}(\bar{p} \| p_{\text{NL}}) + \text{JSD}(\mathcal{D})
\end{align}
where the expectation is for a uniform distribution over $\mathcal{D}$, $p_{\text{NL}}$ is a common language, $\bar{p}$ is the average distribution, and $\text{JSD}(\mathcal{D})$ is the Jensen-Shannon divergence.
\end{lemma}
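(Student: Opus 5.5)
This is a compensation (or "golden section") identity for KL divergence, and the proof is a direct expansion. Write $\mathcal{D}=\{p_1,\dots,p_n\}$ with the uniform weights $1/n$, set $\bar p=\frac1n\sum_i p_i$, and take the generalized Jensen--Shannon divergence with uniform weights:
\begin{align}
\text{JSD}(\mathcal{D}) = \frac1n\sum_i D_{\text{KL}}(p_i\|\bar p) = H(\bar p)-\frac1n\sum_i H(p_i).
\end{align}
The plan is to insert $\bar p$ inside the logarithm of each term $D_{\text{KL}}(p_i\|p_{\text{NL}})$ and then split each term in two.

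\textbf{Step 1.} For each $i$ write
\begin{align}
D_{\text{KL}}(p_i\|p_{\text{NL}})=\sum_x p_i(x)\log\frac{p_i(x)}{\bar p(x)}+\sum_x p_i(x)\log\frac{\bar p(x)}{p_{\text{NL}}(x)}.
\end{align}
The first sum is $D_{\text{KL}}(p_i\|\bar p)$.

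\textbf{Step 2.} Average over $i$ with weights $1/n$.
\begin{itemize}
\item The first sums average to $\text{JSD}(\mathcal{D})$ by the definition above.
\item In the second sums the weight $\log(\bar p(x)/p_{\text{NL}}(x))$ does not depend on $i$. By linearity, $\frac1n\sum_i p_i(x)=\bar p(x)$, so they average to $\sum_x\bar p(x)\log(\bar p(x)/p_{\text{NL}}(x))=D_{\text{KL}}(\bar p\|p_{\text{NL}})$.
\end{itemize}
Adding the two pieces gives exactly the claimed identity.

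\textbf{Main obstacle.} The only delicate point is support and finiteness bookkeeping, not the algebra. If some $p_i$ puts mass where $p_{\text{NL}}$ vanishes, both sides are $+\infty$, since then $\bar p$ also puts mass there. The splitting in Step 1 is legitimate because $p_i\ll\bar p$ always holds: $\bar p\ge p_i/n$. I would first state the result for finite (or countable) $M$ with $p_i\ll p_{\text{NL}}$. I would then note that the general case is identical with Radon--Nikodym derivatives $dp_i/d\bar p$ and $d\bar p/dp_{\text{NL}}$ in place of the ratios, via the chain rule $\frac{dp_i}{dp_{\text{NL}}}=\frac{dp_i}{d\bar p}\frac{d\bar p}{dp_{\text{NL}}}$.

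Finally, since $\text{JSD}\ge0$ and it grows as the domains become more diverse, this yields the interpretation stated before the lemma.
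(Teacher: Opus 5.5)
Your proof is correct and is essentially the paper's argument. The paper writes each $D_{\text{KL}}(p_i\|p_{\text{NL}})$ as $-H(p_i)$ plus a cross-entropy, uses linearity to turn the averaged cross-entropy into $-\sum_x \bar p(x)\log p_{\text{NL}}(x)$, and then adds and subtracts $H(\bar p)$ to obtain the entropy form of $\text{JSD}(\mathcal{D})$; this is the same regrouping of $\sum_x p_i\log p_i$, $\sum_x p_i\log\bar p$ and $\sum_x p_i\log p_{\text{NL}}$ that you get by inserting $\bar p$ inside the logarithm before averaging. Your version differs only in bookkeeping, and it is slightly more robust: working with the bounded ratio $p_i/\bar p\le n$ instead of subtracting entropies avoids the paper's tacit assumption that $H(p_i)$ and $H(\bar p)$ are finite, which is what lets your countable and Radon--Nikodym extensions go through.
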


\begin{proof}
    See Appendix \ref{appendix:proof}.
\end{proof}


\subsection{Specification Costs}

We now characterize the cost of each specification path.

\begin{definition}[Description Length]
Let $L_{\text{NL}}(\sigma)$ be the minimum expected description length to achieve $I(Y;\hat{Y})\geq \sigma$ via natural language, and $L_{\text{formal}} \coloneqq \mathbb{E}_{Y}[L_{\text{formal}}(Y)]$ be the expected length of a formal specification of $Y$.
\end{definition}

\begin{proposition}[Natural Language Cost] \label{prop:nl_cost}
To achieve $I(Y;\hat{Y}) \geq \sigma$ via the natural language path:
\begin{equation}
    L_{\text{NL}}(\sigma) \geq \sigma + \Delta_{\text{trans}}
\end{equation}
\end{proposition}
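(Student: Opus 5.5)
The bound splits into an information term, which comes from the data processing inequality, and a mismatch term, which comes from the cross-entropy identity that defines $\Delta_{\text{trans}}$.

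\textbf{Step 1 (information term).} By the Markov chain $Y \to T \to U \to \hat{Y}$ and the Data Processing Inequality~\eqref{eq:dpi}, we have $\sigma \le I(Y;\hat{Y}) \le I(Y;U) \le H(U)$. By the source coding theorem, the expected length of any prefix-free binary encoding of $U$ is at least $H(U)$. This already gives $L_{\text{NL}}(\sigma) \ge \sigma$. Since $\sigma$ is defined as an infimum over admissible $\hat{Y}$, we pass to the minimum at the end.

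\textbf{Step 2 (translation gap).} Here we use the paper's cross-entropy model. Utterances are drawn from a natural-language code with lengths $-\log p_{\text{NL}}$, i.e.\ an optimal code for $p_{\text{NL}}$ rather than for the domain. Conveying an output distributed as $p_M$ then costs the cross-entropy $H(p_M,p_{\text{NL}}) = H(p_M) + \Delta_{\text{trans}}$, rather than $H(p_M)$. Restrict attention to the portion of $Y$ that must be transmitted to reach $\sigma$ bits, i.e.\ an effective variable $Y'$ with $H(Y') \ge \sigma$ (for instance, the coarsening of $Y$ induced by $\sem{T}$). Coding $Y'$ with a $p_{\text{NL}}$-matched code costs $H(Y') + D_{\text{KL}}(p_{Y'} \| p_{\text{NL}}) \ge \sigma + \Delta_{\text{trans}}$. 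This uses Gibbs' inequality to justify that a mismatched code incurs exactly the KL penalty in expectation.

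\textbf{Main obstacle.} The delicate point is that $\Delta_{\text{trans}}$ is defined for the full distribution $p_M$, whereas only $\sigma$ bits (a coarsening of $Y$) are transmitted. By the chain rule for KL divergence, the divergence of a coarsened (marginalized) distribution is at most that of the full one, so the monotonicity goes the wrong way for a direct inequality. I would first try to resolve this by adopting the paper's modeling convention that $\Delta_{\text{trans}}$ denotes the divergence of the distribution actually being described. Concretely, interpret $p_M$ as the induced distribution over the task-relevant content and define $\Delta_{\text{trans}}$ relative to it. The bound then follows by adding Steps 1 and 2.

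If a convention-free statement is required, I would present the bound as $L_{\text{NL}}(\sigma) \ge \sigma + D_{\text{KL}}(p_{Y'} \| p_{\text{NL}})$, and remark that this equals $\Delta_{\text{trans}}$ at full specificity ($\sigma = H(Y)$, $Y' = Y$).
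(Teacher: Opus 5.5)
Your two steps are exactly the paper's proof. It uses \eqref{eq:dpi} to get $I(Y;U)\ge I(Y;\hat Y)\ge\sigma$ and then asserts in one sentence that ``encoding this in natural language incurs the translation gap.'' The point you isolate as the main obstacle is what that sentence passes over, and your diagnosis is right. No argument can close it: with $\Delta_{\text{trans}}=D_{\text{KL}}(p_M\|p_{\text{NL}})$ held fixed on the full output space, the inequality fails in general below full specificity.

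Already at $\sigma=0$ the statement would give $L_{\text{NL}}(0)\ge\Delta_{\text{trans}}$, although the empty description suffices. For a nontrivial instance, let $Y=(A,B)$ with $A,B$ independent and uniform on $\{0,1\}^k$ and $\{0,1\}^m$. Let $p_{\text{NL}}(a,b)=2^{-k}r(b)$ with $r$ far from uniform. Then $\Delta_{\text{trans}}=D_{\text{KL}}(\mathrm{Unif}\,\|\,r)$ can be made as large as you like. Yet stating $A$ alone costs $k$ bits under the paper's own cross-entropy model, since the $A$-marginals of $p_M$ and $p_{\text{NL}}$ agree. A generator returning $(A,b_0)$ attains $I(Y;\hat Y)=k$, so $L_{\text{NL}}(k)\le k<k+\Delta_{\text{trans}}$. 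With a redundancy-free formal code and $\Delta_{\text{trans}}>m$, this $\sigma=k$ lies above $\sigma^*$ while natural language still wins, so Theorem~\ref{thm:crossover} inherits the problem.

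Your convention, or your convention-free bound, is therefore the real content of the proposition. The convention measures the translation gap on the content actually transmitted, so it depends on $\sigma$ or $T$. The paper's one-line justification is valid only under such a reading. As you note, the stated form is recovered at $\sigma=H(Y)$.

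Two smaller repairs. First, $L_{\text{NL}}$ is the expected code length of the message $U$, so the variable $Y'$ whose cross-entropy you use must be recoverable from $U$. The cleanest choice is $U$ itself. With $q$ the natural-language distribution over utterances, $\mathbb{E}[-\log q(U)]=H(U)+D_{\text{KL}}(p_U\|q)\ge\sigma+D_{\text{KL}}(p_U\|q)$, using $H(U)\ge I(Y;U)\ge\sigma$. For a deterministic generator you can instead take $Y'=\hat Y$, since $H(\hat Y)\ge I(Y;\hat Y)\ge\sigma$. This places the divergence on $M$, against the distribution that $q$ induces on outputs.

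The coarsening induced by $\sem{T}$ is not a safe example. $\sem{T}$ depends on $C_{\text{imp}}$, which $U$ deliberately omits, so $T$ need not be recoverable from $U$. Its cross-entropy then need not lower-bound $L_{\text{NL}}$.

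Second, Step~2 already contains Step~1, so the steps are not ``added''; Step~2 alone yields the bound.
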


\begin{proof}
By \eqref{eq:dpi}, $I(Y; U) \geq I(Y; \hat{Y}) \geq \sigma$. The description $U$ must therefore carry at least $\sigma$ bits about $Y$. Encoding this in natural language incurs the translation gap, giving the bound.
\end{proof}

The natural language cost can thus be divided into two components: the specificity requirement $\sigma$ (how much information must be conveyed) and the translation gap $\Delta_{\text{trans}}$ (the overhead of using a general-purpose language).

When using formal language, we define redundancy as a measure of the overhead of a formal language beyond the information-theoretic minimum. 
\begin{definition}[Redundancy]
We define the \emph{redundancy} of a formal language as $R_{\text{formal}} = L_{\text{formal}} - H(Y)$.
\end{definition}

Formal languages specify $Y$ completely, regardless of how much of that specification the task requires. The expected length therefore decomposes as $L_{\text{formal}} = H(Y) + R_{\text{formal}}$: the entropy $H(Y)$ is the information-theoretic minimum, and $R_{\text{formal}}$ is the redundancy of the encoding. Crucially, no translation gap is incurred.

\subsection{The Specificity Crossover}

We can now state our main result: natural language and formal language dominate in different specificity regimes.



\begin{theorem}[Specificity Crossover] \label{thm:crossover}
For $Y \sim p_{M}$, there exists a crossover threshold:
\begin{equation}
    \sigma^* = H(Y) + R_{\text{formal}} - \Delta_{\text{trans}}
\end{equation}
such that:
\begin{itemize}
    \item For $\sigma(T) < \sigma^*$: Natural language can be more efficient (i.e., possibly $L_{\text{NL}} < L_{\text{formal}}$)
    \item For $\sigma(T) \geq \sigma^*$: Natural language cannot be more efficient  (i.e., necessarily $L_{\text{formal}} \leq L_{\text{NL}}$)
\end{itemize}
\end{theorem}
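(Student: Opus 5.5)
The plan is to compare the lower bound on the natural-language cost from Proposition~\ref{prop:nl_cost} with the exact expression for the formal cost, $L_{\text{formal}} = H(Y) + R_{\text{formal}}$, which follows from the definition of redundancy. The threshold $\sigma^*$ is the value of $\sigma$ at which these two quantities coincide. The two bullets are asymmetric. The second is a genuine inequality to prove. The first is only a possibility claim, which I would justify by showing that the lower bound no longer rules out $L_{\text{NL}} < L_{\text{formal}}$.

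\textbf{Second bullet.} Fix a task $T$ with $\sigma(T) \geq \sigma^*$. By Definition~\ref{def:task_specificity_new}, any natural-language scheme whose output satisfies $\hat{Y} \in \sem{T}$ with probability $>1-\epsilon$ must achieve $I(Y;\hat{Y}) \geq \sigma(T)$. Since $L_{\text{NL}}$ is nondecreasing in its argument, Proposition~\ref{prop:nl_cost} then gives
\begin{equation*}
L_{\text{NL}}(\sigma(T)) \geq \sigma(T) + \Delta_{\text{trans}} \geq \sigma^* + \Delta_{\text{trans}} = H(Y) + R_{\text{formal}} = L_{\text{formal}} .
\end{equation*}
This is the claim $L_{\text{formal}} \leq L_{\text{NL}}$. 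The formal path meets the task trivially: it transmits $Y$ exactly, so $\hat{Y} = Y \in \sem{T}$ by the standing assumption $Y \in \sem{T}$, and $I(Y;\hat{Y}) = H(Y) \geq \sigma(T)$. The comparison is therefore between two schemes that both solve $T$.

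\textbf{First bullet.} For $\sigma(T) < \sigma^*$, the same chain shows that the lower bound $\sigma(T) + \Delta_{\text{trans}}$ lies strictly below $L_{\text{formal}}$, so the bound no longer forbids $L_{\text{NL}} < L_{\text{formal}}$. To make ``possibly'' concrete, I would give a witness in which natural language nearly attains the bound. Take a uniform $p_M$ and a task whose $\sem{T}$ is a union of equivalence classes that a short utterance names exactly, with $p_{\text{NL}}$ close to the induced distribution on these classes. Then $L_{\text{NL}} \approx \sigma + \Delta_{\text{trans}} < L_{\text{formal}}$.

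\textbf{Main obstacle.} The difficulty is interpretive rather than computational. Proposition~\ref{prop:nl_cost} is stated loosely: it effectively treats $\Delta_{\text{trans}}$ as an additive overhead on any $\sigma$-bit message. I will assume it as given, as the statement permits. I also need monotonicity of $L_{\text{NL}}$ in $\sigma$, which holds because any scheme achieving a larger mutual information also achieves a smaller one. The first bullet must be phrased strictly as the failure of the lower bound to exclude the possibility, with the example offered only as an illustration. No tightness claim should be made in general.
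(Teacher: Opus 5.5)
Your proposal is correct and follows the paper's argument. The paper likewise just combines the lower bound $L_{\text{NL}}(\sigma) \geq \sigma + \Delta_{\text{trans}}$ from Proposition~\ref{prop:nl_cost} with $L_{\text{formal}} = H(Y) + R_{\text{formal}}$. It phrases this as the contrapositive (assuming $L_{\text{NL}} < L_{\text{formal}}$ forces $\sigma < \sigma^*$), whereas you write the direct chain for the second bullet. Your extra remarks are sound refinements that the paper leaves implicit, not a different route: that the formal path itself solves $T$, and that the first bullet only says the bound does not exclude $L_{\text{NL}} < L_{\text{formal}}$, with your witness serving as illustration.
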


\begin{proof}
By \ref{prop:nl_cost}, achieving $\sigma$ requires $L_{\text{NL}}(\sigma) \geq \sigma + \Delta_{\text{trans}}$. Formal specification costs $L_{\text{formal}} = H(Y) + R_{\text{formal}}$ regardless of $\sigma$. Setting $L_{\text{NL}} < L_{\text{formal}}$ yields:
\begin{align}
    \sigma + \Delta_{\text{trans}}< H(Y) + R_{\text{formal}} \Rightarrow \sigma < \sigma^* 
\end{align}
\end{proof}

The crossover point $\sigma^*$ marks where underspecification benefits can no longer compensate for the translation gap. Below $\sigma^*$, natural language can win by not specifying what doesn't matter or what can be inferred. Above $\sigma^*$, formal language wins by avoiding translation overhead.

\begin{remark}
For the threshold $\sigma^*$ to be meaningful, we need $R_{\text{formal}} < \Delta_{\text{trans}}$. This criterion is satisfied whenever $R_{\text{formal}}$ is small (efficient formal language) and $\Delta_{\text{trans}} > 0$ (domain differs from general language).
If $\Delta_{\text{trans}} \leq R_{\text{formal}}$, natural language could be more efficient even at full specification by removing redundancy in the target. However, intuitively, we expect domain-specific languages to be better optimized for their domains compared to natural language.
\end{remark}

\begin{remark}
Proposition \ref{prop:nl_cost} describes a lower bound, assuming no redundancy in natural language. In practice, natural language also has redundancy \citep{shannon1951prediction}, even if lower than in some formal languages \citep{hindle2016naturalness}, thus tightening the bound. 
\end{remark}

\begin{remark}
Our analysis assumes a one-shot setting, in which a valid generated instance is obtained. In practice, pragmatic inference errors often occur and latent preferences often arise, resulting in an iterative process of trial and error. This process can involve repetition and overspecification raising the price of natural language and lowering the threshold.
\end{remark}

\begin{remark}
In the other direction, using formal languages typically requires domain learning and mental effort \citep{casalnuovo2018studyingdifferencenaturalprogramming}. These factors present additional costs that may raise $\sigma^*$, but should not eliminate the crossover.
\end{remark}

An important conclusion from the framework and theorem is that if a task is compositional (\S\ref{sec:compositionality}), different components can vary in their implementation method. This idea leads to \textit{hybrid} approaches, combining both natural and formal language.

\section{Case Study: Image Generation and Other Modalities} \label{sec:case_study}


\begin{figure*}[t]
\centering
\includegraphics[width=0.99\textwidth]{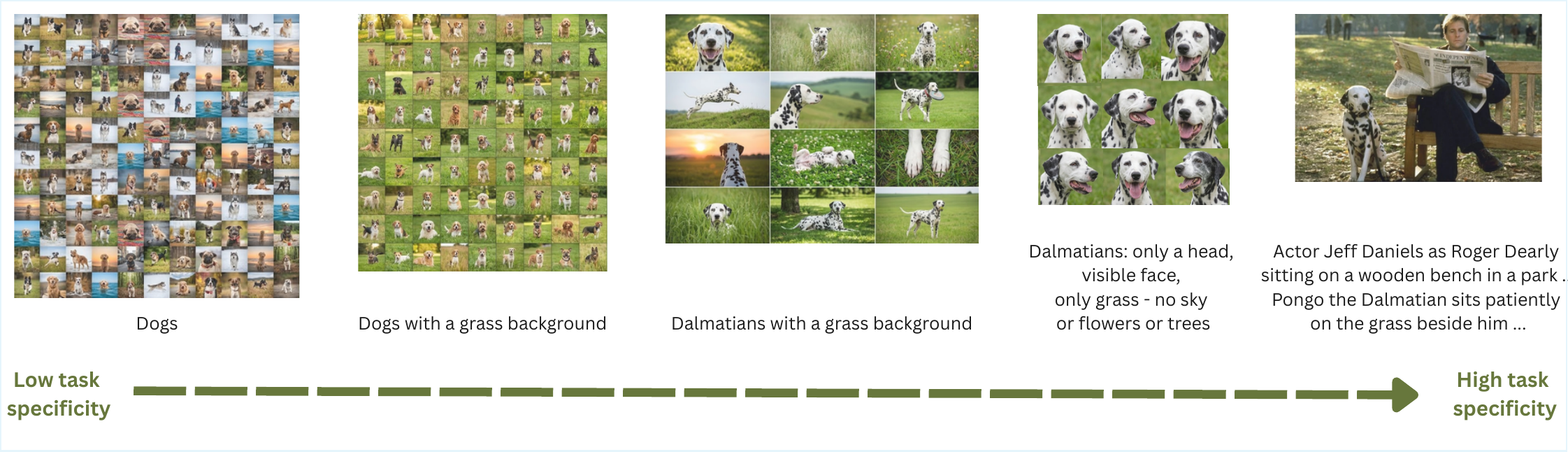}
\caption{Illustration of task specificity in image generation. As the description becomes longer, the space of images is narrower. At the limit, if a specific image is desired, perfectly describing it in words is inefficient (compared to filming or drawing). All images were generated by Gemini-3, except for the image from Disney's 1996 \textit{101 Dalmatians} (right).}
\label{fig:dogs}
\vspace{-0.2cm}
\end{figure*}

Text-to-image generation has emerged as one of the most visible applications of generative AI, with significant implications for artists' workflows and employment \citep{jiang2023ai}. \citet{anantrasirichai2022artificial} provide a broader examination of AI for creativity across modalities, including ethical considerations, while surveys reveal complex attitudes among both professional and non-professional artists \citep{tang2024exploring}. Understanding the limitations of natural language for image specification is thus both theoretically important and practically consequential.

\paragraph{The specificity crossover.}
Image generation provides a ``clean'' domain for demonstrating the specificity crossover. Consider white noise images where pixels are independently and uniformly distributed: such images are maximally incompressible, with maximal entropy and zero redundancy ($R_{\text{formal}} = 0$). Since natural language must assign probability mass to all communicative intents, any nonzero translation gap has $\Delta_{\text{trans}} = D_{\text{KL}}(p_M \| p_{\text{NL}}) > 0$. By Theorem \ref{thm:crossover}, a crossover threshold $\sigma^* < H(Y)$ must therefore exist.

\paragraph{Quantitative Illustration.}
To illustrate the information-theoretic limits, consider a $250 \times 250$-pixel RGB image. The total number of possible images is $256^{3 \times 250 \times 250} = 2^{1,500,000}$. Assuming a vocabulary of 32 characters, a string of length $N$ encodes $2^{5N}$ possible strings, so $N \geq 300{,}000$ characters would be needed---comparable to a novel. While the space of images people actually want is obviously smaller, this demonstration shows a fundamental challenge: as users push toward greater specificity, prompts must grow toward this limit, becoming unwieldy. Consider an artist who envisions a detailed work of art. Image generators cannot practically implement this artwork simply because it will be too hard to describe in full detail. 

\subsection{Empirical Evidence in Image Generation}
\paragraph{Trends in Iterative Prompting.}
The theoretical crossover manifests clearly in empirical studies of user interactions with text-to-image systems. \citet{don-yehiya-etal-2023-human} compiled a dataset of iterative interactions between users and Midjourney, analyzing how prompts evolve across multiple attempts. Their findings reveal two distinct trends that directly support our framework:
\begin{enumerate}
    \item \textbf{Trend 1: Increasing Verbosity.} Prompts become more verbose---the length and syntactic complexity grow---as interactions become longer. Users ``climb down the ladder of specification'', representing the conversion of pragmatic underspecification (\ref{item:pragmatic_specification})---where users assume the model would infer their intent ---into full specification (\ref{item:full_specification}) or even overspecification (\ref{item:overspecification}), where users add redundant information to compensate for misinterpretation.

    \item \textbf{Trend 2: Increasing Model-Orientation.} Prompts shift toward language patterns that the model responds to more reliably. Perplexity decreases as users adopt ``magic words''---terms that empirically improve generation quality regardless of their natural language meaning (e.g., ``4k,'' ``artstation,'' ``trending''). This adaptation represents users implicitly reducing the translation gap (\S \ref{sec:translation_gap}) by shifting toward language that better matches the model's training distribution, sacrificing naturalness for domain-specific efficiency.
\end{enumerate}

\paragraph{Prompt engineering.}
Research on prompt engineering for image generation \citep{oppenlaender2024taxonomy,liu2022design} further confirms these findings. Initially, prompts are highly underspecified (``a cat''), efficiently describing large equivalence classes. As users seek more specific outputs, prompt engineering tends to create increasingly specific prompts (``a fluffy orange tabby cat, sitting, front view, studio lighting, photorealistic'') \citep{hao2023optimizing}. The phenomenon of \textit{negative prompts} \citep{ban2024understanding,park2025guiding}, such as ``no extra fingers,'' further demonstrates the process: users initially operate at pragmatic underspecification, assuming shared knowledge, but are forced toward overspecification upon failure. 

\paragraph{Hybrid Approaches.}
Leveraging compositionality (\S \ref{sec:compositionality}), hybrid approaches can benefit from both modalities. For images, this suggests combining natural language for high-level concepts (``professional business portrait'') with sketches or masks for spatial layout or specific details. Work on hybrid inputs \citep{zeng2022sketchedit,vinker2025drawing} demonstrates this principle. As task specificity increases, more specification naturally shifts toward direct manipulation in the target domain, with natural language reserved for aspects where underspecification provides genuine value.

\subsection{Additional Modalities}
\vspace{-0.1cm}
\paragraph{Code Generation.}
In code generation, the distinction between specification understanding and code optimization becomes apparent. 
We generally divide code requirements into two broad categories: outcome-related, which can include outputs and runtime, and code-related, which can include readability, maintainability, code weaknesses, etc. 
Generally, loose code-related requirements allow a large degree of autonomy for the agent, potentially making outcome-related requirements more efficient. Conversely, code-related requirements are often more general than outcome-related ones, since a single function can correspond to infinitely many outcomes, thus being more efficient.

In simple visualization or proof-of-concept tasks, natural language excels: ``make a bar chart of sales data'' efficiently conveys intent with few code-related requirements and loose outcome-related ones. These tasks leverage indifferent underspecification (\ref{item:indifferent_specification}) to avoid the overhead of precise coding.
However, production code requirements dramatically increase task specificity. Outcome-related requirements about edge cases, etc., and code-related requirements for error handling, optimization, and security represent implicit constraints that surface through testing. Additionally, errors in production code are more costly, forcing extra specification.
Multiple works show that while AI coding assistants accelerate initial development, they can lead to low-quality code \citep{li2024assessing}, requiring iterative refinement that often negates these gains \citep{becker2025measuringimpactearly2025ai}.
These trends align with our crossover prediction: as task specificity increases, the cross-entropy cost exceeds underspecification benefits.

The hybrid approach is also prominent in coding \citep{barke2023grounded,tang2025naturaleditcodemodificationdirect}.
Modern coding assistants providing intelligent code completion can achieve compression benefits through pragmatic underspecification (\ref{item:pragmatic_specification}; exploiting repeated patterns, suggesting contextual completions), while partially avoiding the overhead of natural language. Moreover, when using coding agents iteratively, the input consists of code and new requirements (or error reports), which can be seen as a hybrid setting (at least when the user gives some form of feedback, even implicit, about the code itself). 

\paragraph{Audio and Music Generation.}
Text-to-audio generation \citep{agostinelli2023musiclm,lerch2025survey} shows similar patterns. For general requests like ``relaxing piano music'', the large range of acceptable outputs allows efficient natural language description, exploiting vast underspecification. 
Conversely, requesting specific output can grow extremely verbose, at which point, MIDI specification or musical notation---formal languages optimized for concrete musical description--- become more efficient \citep{mitra2025music}. 
Here too, hybrid methods that guide generation with musical examples were proposed \citep{zhu2024symbolic}.

\paragraph{Text Generation and Summarization.}

Interestingly, the crossover can favor description in some text generation tasks. Writing prose from bullet points (summary $\rightarrow$ full text) may be more efficient than direct prose writing because the bullets provide compressed details while underspecifying stylistic choices.
Similarly, in story generation \citep{teleki-etal-2025-survey} and summarization \citep{zhang2025systematic}, loose requirements (e.g., what exactly should be included in the summary?) that allow a large output space \citep{chawla2026lessonsfieldadaptablelifecycle} can utilize underspecification. As the requirements become more precise, the ability to create a satisfying outcome based on a prompt alone diminishes.
In these cases, hybrid approaches are natural, where partial input \citep{lin2025r2llmbasednoveltoscreenplay,lee-etal-2025-navigating, urlana-etal-2024-controllable} or post-generation editing of the outcome can combine both qualities. 

\section{Implications and Future Directions} \label{sec:implications}

Our analysis reveals that natural and formal languages occupy different niches in the specification space. Natural language excels at low specificity tasks with large equivalence classes, creative tasks where exploration is valued, and contexts with low expertise requirements, where underspecification can be exploited
Formal languages excel at high specificity tasks, contexts where domain expertise is available, and production systems requiring precision and verifiability. Formal languages succeed through domain-tailored language, lower error rates, and formal verifiability. 
We reiterate the classical view \citep{meyer1993formalism,wing2002specifier, gervasi2002lightweight,ferre2016bridging} that formal specification should be complementary to natural descriptions. 

Formal languages are also crucial for a precise understanding of the area of interest, allowing deep insight into underlying rules, which is also beneficial in practice. As Dijkstra observed, formal texts are ``an amazingly effective tool for ruling out all sorts of nonsense that, when we use our native tongues, are almost impossible to avoid'' \citep{dijkstra1978}.

Our framework suggests that current evaluation practices may mischaracterize the value of many tasks. For example, popular code generation benchmarks, such as HumanEval \citep{chen2021evaluatinglargelanguagemodels} and SWE-bench \citep{jimenez2023swe}, primarily test code implementation rather than specification understanding. Benchmarks for specification understanding and hybrid programming should also be developed.
Specifically, code benchmarks should explicitly compare: natural language specification with AI generation, code completion assistance within formal languages, and naive coding without assistance, across varying task specificity levels, including both outcome- and code-related requirements.

Regarding practical tools, we advocate for hybrid systems, optimized for pragmatic understanding, that allow seamless user involvement in the process \citep{tang2025naturaleditcodemodificationdirect}. These systems will benefit from both qualities: efficient underspecification and pragmatics of natural language, and precise specification of formal language. 
Our analysis also frames two open empirical questions: how should systems balance pragmatic assumptions against asking for clarification, and at what compositional boundaries should formal components be optimally used? Both can be studied directly within the specificity-crossover framework.
Importantly, in these systems, human expertise will still have a significant role, allowing better integration and supervision.

\vspace{-0.1cm}
\section{Alternative Views}
\vspace{-0.1cm}
The position we argue against holds that advances in LLMs will eventually render formal languages obsolete. This position is clearly present in the industry, as mentioned in the Introduction (\S\ref{sec:intro}). For example, in code, proponents point to rapid improvements in code generation and the democratizing potential of natural language interfaces \citep{jiang2024survey}, viewing current limitations as engineering challenges that scale and better training will overcome. Our analysis challenges this view not by denying continued progress, but by identifying fundamental information-theoretic limits: the translation gap $\Delta_{\text{trans}}$ represents structural constraints that persist regardless of model capability. Even a perfect model cannot escape the fact that natural language, optimized for open-ended communication, incurs unavoidable overhead when specifying concrete details in restricted domains.

We also argue against the opposite view, saying that tools such as code generators are ``more harm than good''. This sentiment is non-negligible among programmers (as evinced in the survey mentioned in \S\ref{sec:intro}). We argue that despite some challenges, these tools have a theoretical role through indifferent and pragmatic underspecification, providing value for both professional and low-level programming.

\vspace{-0.1cm}
\section{Conclusion}
\vspace{-0.1cm}

We presented a formal framework that shows natural and formal languages serve fundamentally different purposes in task specification. Our framework builds on linguistic insights, specifically underspecification. 
Under this framework, natural language achieves efficiency through underspecification---both indifferent and pragmatic---while formal languages provide precise control. We show that, due to the translation gap, a specificity crossover point exists between cases where pure natural language or pure formal language is more efficient. Hybrid models can achieve both benefits by applying different methods across components. We also demonstrated that our framework is applicable to other modalities, such as images, code, audio, and even text.

Our central claim is the following: natural language will not---and should not---fully replace formal languages. Rather, both languages form a complementary toolkit, with optimal choice depending on task specificity, novelty, and professional requirements. Instead of abandoning formal languages, we should invest in tools that improve efficiency within formal domains while reserving natural language for areas or components where underspecification provides genuine value. The future lies in seamless integration of both modalities, allowing users to operate fluidly across the specification spectrum.

We conclude with Jensen Huang's broader claim:\footnote{London Tech, June 2025; \url{https://youtu.be/SsNS3Xm9ig4?t=1940}}
\vspace{-0.1cm}

\begin{adjustwidth}{-0.25cm}{-0.25cm}
\begin{quote}
    AI is the great equalizer... For the last 50 years, 60 years, Computer Science became a field of science, and it was available to tens of millions of people out of billions of people. This technology was hard to use. ... but now, all of a sudden, there’s a new programming language. This new programming language is called ``human''. ... The way you program a computer today [is] to ask the computer to do something for you, even write a program, generate images, write a poem. Just ask it nicely.
\end{quote}
\end{adjustwidth}
\vspace{-0.1cm}
Code and image generators provide an extremely powerful tool, especially for non-experts. However, we argue that the role of domain experts---such as programmers and artists---will not become obsolete.

\section*{Acknowledgments}
This research was supported by grants from the
Israeli Ministry of Science and Technology, the
Council for Higher Education, and the Israel Science Foundation (Grant No. 2424/21).

\bibliographystyle{icml2026}
\bibliography{paper}

@misc{casalnuovo2018studyingdifferencenaturalprogramming,
      title={Studying the Difference Between Natural and Programming Language Corpora}, 
      author={Casey Casalnuovo and Kenji Sagae and Prem Devanbu},
      year={2018},
      eprint={1806.02437},
      archivePrefix={arXiv},
      primaryClass={cs.CL},
      url={https://arxiv.org/abs/1806.02437}, 
}

@inproceedings{ferre2016bridging,
  title={Bridging the gap between formal languages and natural languages with zippers},
  author={Ferr{\'e}, S{\'e}bastien},
  booktitle={European Semantic Web Conference},
  pages={269--284},
  year={2016},
  organization={Springer}
}

@inproceedings{dijkstra1978,
author = {Dijkstra, Edsger W.},
title = {On the Foolishness of "Natural Language Programming"},
year = {1978},
isbn = {354009251X},
publisher = {Springer-Verlag},
address = {Berlin, Heidelberg},
booktitle = {Program Construction, International Summer Schoo},
pages = {51–53},
numpages = {3}
}

@article{shannon1951prediction,
  title={Prediction and entropy of printed English},
  author={Shannon, Claude E},
  journal={Bell system technical journal},
  volume={30},
  number={1},
  pages={50--64},
  year={1951},
  publisher={Wiley Online Library}
}

@inproceedings{
deletang2024language,
title={Language Modeling Is Compression},
author={Gregoire Deletang and Anian Ruoss and Paul-Ambroise Duquenne and Elliot Catt and Tim Genewein and Christopher Mattern and Jordi Grau-Moya and Li Kevin Wenliang and Matthew Aitchison and Laurent Orseau and Marcus Hutter and Joel Veness},
booktitle={The Twelfth International Conference on Learning Representations},
year={2024},
url={https://openreview.net/forum?id=jznbgiynus}
}

@book{searle1969speech,
  title={Speech acts: An essay in the philosophy of language},
  author={Searle, John R},
  year={1969},
  publisher={Cambridge university press}
}

@inproceedings{teleki-etal-2025-survey,
    title = "A Survey on {LLM}s for Story Generation",
    author = "Teleki, Maria  and
      Bengali, Vedangi  and
      Dong, Xiangjue  and
      Janjur, Sai Tejas  and
      Liu, Haoran  and
      Liu, Tian  and
      Wang, Cong  and
      Liu, Ting  and
      Zhang, Yin  and
      Shipman, Frank  and
      Caverlee, James",
    editor = "Christodoulopoulos, Christos  and
      Chakraborty, Tanmoy  and
      Rose, Carolyn  and
      Peng, Violet",
    booktitle = "Findings of the Association for Computational Linguistics: EMNLP 2025",
    month = nov,
    year = "2025",
    address = "Suzhou, China",
    publisher = "Association for Computational Linguistics",
    url = "https://aclanthology.org/2025.findings-emnlp.750/",
    doi = "10.18653/v1/2025.findings-emnlp.750",
    pages = "13954--13966",
    ISBN = "979-8-89176-335-7"
}

@inproceedings{urlana-etal-2024-controllable,
    title = "Controllable Text Summarization: Unraveling Challenges, Approaches, and Prospects - A Survey",
    author = "Urlana, Ashok  and
      Mishra, Pruthwik  and
      Roy, Tathagato  and
      Mishra, Rahul",
    editor = "Ku, Lun-Wei  and
      Martins, Andre  and
      Srikumar, Vivek",
    booktitle = "Findings of the Association for Computational Linguistics: ACL 2024",
    month = aug,
    year = "2024",
    address = "Bangkok, Thailand",
    publisher = "Association for Computational Linguistics",
    url = "https://aclanthology.org/2024.findings-acl.93/",
    doi = "10.18653/v1/2024.findings-acl.93",
    pages = "1603--1623"
}

@misc{lin2025r2llmbasednoveltoscreenplay,
      title={R$^2$: A LLM Based Novel-to-Screenplay Generation Framework with Causal Plot Graphs}, 
      author={Zefeng Lin and Yi Xiao and Zhiqiang Mo and Qifan Zhang and Jie Wang and Jiayang Chen and Jiajing Zhang and Hui Zhang and Zhengyi Liu and Xianyong Fang and Xiaohua Xu},
      year={2025},
      eprint={2503.15655},
      archivePrefix={arXiv},
      primaryClass={cs.AI},
      url={https://arxiv.org/abs/2503.15655}, 
}

@inproceedings{lee-etal-2025-navigating,
    title = "Navigating the Path of Writing: Outline-guided Text Generation with Large Language Models",
    author = "Lee, Yukyung  and
      Ka, Soonwon  and
      Son, Bokyung  and
      Kang, Pilsung  and
      Kang, Jaewook",
    editor = "Chen, Weizhu  and
      Yang, Yi  and
      Kachuee, Mohammad  and
      Fu, Xue-Yong",
    booktitle = "Proceedings of the 2025 Conference of the Nations of the Americas Chapter of the Association for Computational Linguistics: Human Language Technologies (Volume 3: Industry Track)",
    month = apr,
    year = "2025",
    address = "Albuquerque, New Mexico",
    publisher = "Association for Computational Linguistics",
    url = "https://aclanthology.org/2025.naacl-industry.20/",
    doi = "10.18653/v1/2025.naacl-industry.20",
    pages = "233--250",
    ISBN = "979-8-89176-194-0"
}

@misc{chawla2026lessonsfieldadaptablelifecycle,
      title={Lessons from the Field: An Adaptable Lifecycle Approach to Applied Dialogue Summarization}, 
      author={Kushal Chawla and Chenyang Zhu and Pengshan Cai and Sangwoo Cho and Scott Novotney and Ayushman Singh and Jonah Lewis and Keasha Safewright and Alfy Samuel and Erin Babinsky and Shi-Xiong Zhang and Sambit Sahu},
      year={2026},
      eprint={2601.08682},
      archivePrefix={arXiv},
      primaryClass={cs.CL},
      url={https://arxiv.org/abs/2601.08682}, 
}

@INPROCEEDINGS{li2024assessing,
  author={Li, Shuang and Cheng, Yuntao and Chen, Jinfu and Xuan, Jifeng and He, Sen and Shang, Weiyi},
  booktitle={2024 IEEE 35th International Symposium on Software Reliability Engineering (ISSRE)}, 
  title={Assessing the Performance of AI-Generated Code: A Case Study on GitHub Copilot}, 
  year={2024},
  volume={},
  number={},
  pages={216-227},
  doi={10.1109/ISSRE62328.2024.00030}}

@misc{tang2025naturaleditcodemodificationdirect,
      title={NaturalEdit: Code Modification through Direct Interaction with Adaptive Natural Language Representation}, 
      author={Ningzhi Tang and David Meininger and Gelei Xu and Yiyu Shi and Yu Huang and Collin McMillan and Toby Jia-Jun Li},
      year={2025},
      eprint={2510.04494},
      archivePrefix={arXiv},
      primaryClass={cs.HC},
      url={https://arxiv.org/abs/2510.04494}, 
}

@article{zhu2024symbolic,
  title={Symbolic Music Generation with Fine-grained Interactive Textural Guidance},
  author={Zhu, Tingyu and Liu, Haoyu and Jiang, Zhimin and Zheng, Zeyu},
  journal={CoRR},
  year={2024}
}

@article{mitra2025music,
  title={Music generation using deep learning and generative AI: a systematic review},
  author={Mitra, Rohan and Zualkernan, Imran},
  journal={IEEE Access},
  year={2025},
  publisher={IEEE}
}

@article{wing2002specifier,
  title={A specifier's introduction to formal methods},
  author={Wing, Jeannette M},
  journal={Computer},
  volume={23},
  number={9},
  pages={8--22},
  year={2002},
  publisher={IEEE}
}

@article{park2025guiding,
  title={Guiding What Not to Generate: Automated Negative Prompting for Text-Image Alignment},
  author={Park, Sangha and Kim, Eunji and Oh, Yeongtak and Choi, Jooyoung and Yoon, Sungroh},
  journal={arXiv preprint arXiv:2512.07702},
  year={2025}
}

@inproceedings{ban2024understanding,
  title={Understanding the Impact of Negative Prompts: When and How Do They Take Effect?},
  author={Ban, Yuanhao and Wang, Ruochen and Zhou, Tianyi and Cheng, Minhao and Gong, Boqing and Hsieh, Cho-Jui},
  booktitle={european conference on computer vision},
  pages={190--206},
  year={2024},
  organization={Springer}
}

@article{oruche2025asurvey,
author = {Oruche, Roland and Goruganthu, Sai Keerthana and Akula, Rithika and Cheng, Xiyao and Goni, Ashraful Md and Shibo, Bruce W. and Kee, Kerk and Zampieri, Marcos and Calyam, Prasad},
title = {A Survey on the Recent Advancements in Human-Centered Dialog Systems},
year = {2025},
issue_date = {October 2025},
publisher = {Association for Computing Machinery},
address = {New York, NY, USA},
volume = {57},
number = {10},
issn = {0360-0300},
url = {https://doi.org/10.1145/3729220},
doi = {10.1145/3729220},
journal = {ACM Comput. Surv.},
month = may,
articleno = {258},
numpages = {36}
}

@article{zadrozny2000natural,
  title={Natural language dialogue for personalized interaction},
  author={Zadrozny, Wlodek and Budzikowska, Malgorzata and Chai, Joyce and Kambhatla, Nanda and Levesque, Sylvie and Nicolov, Nicolas},
  journal={Communications of the ACM},
  volume={43},
  number={8},
  pages={116--120},
  year={2000},
  publisher={ACM New York, NY, USA}
}

@article{bergen2012speaker,
  title={Speaker knowledge influences the comprehension of pragmatic inferences.},
  author={Bergen, Leon and Grodner, Daniel J},
  journal={Journal of Experimental Psychology: Learning, Memory, and Cognition},
  volume={38},
  number={5},
  pages={1450},
  year={2012},
  publisher={American Psychological Association}
}

@article{futrell2022information,
  title={Information theory as a bridge between language function and language form},
  author={Futrell, Richard and Hahn, Michael},
  journal={Frontiers in Communication},
  volume={7},
  pages={657725},
  year={2022},
  publisher={Frontiers Media SA}
}

@article{reyle1993dealing,
  title={Dealing with ambiguities by underspecification: Construction, representation and deduction},
  author={Reyle, Uwe},
  journal={Journal of semantics},
  volume={10},
  number={2},
  pages={123--179},
  year={1993},
  publisher={Oxford University Press}
}

@article{copestake2005minimal,
  title={Minimal recursion semantics: An introduction},
  author={Copestake, Ann and Flickinger, Dan and Pollard, Carl and Sag, Ivan A},
  journal={Research on language and computation},
  volume={3},
  number={2},
  pages={281--332},
  year={2005},
  publisher={Springer}
}

@article{frisson2009semantic,
  title={Semantic underspecification in language processing},
  author={Frisson, Steven},
  journal={Language and linguistics compass},
  volume={3},
  number={1},
  pages={111--127},
  year={2009},
  publisher={Wiley Online Library}
}

@article{poesio1995semantic,
  title={Semantic ambiguity and perceived ambiguity},
  author={Poesio, Massimo},
  journal={arXiv preprint cmp-lg/9505034},
  year={1995}
}

@article{ginzburg1996dynamics,
  title={Dynamics and the semantics of dialogue},
  author={Ginzburg, Jonathan and others},
  journal={Logic, language and computation},
  volume={1},
  pages={221--237},
  year={1996}
}

@book{levinson2000presumptive,
  title={Presumptive meanings: The theory of generalized conversational implicature},
  author={Levinson, Stephen C},
  year={2000},
  publisher={MIT press}
}

@article{barker2010free,
  title={Free choice permission as resource-sensitive reasoning},
  author={Barker, Chris},
  journal={Semantics and Pragmatics},
  volume={3},
  pages={10--1},
  year={2010}
}

@article{lasersohn1999pragmatic,
  title={Pragmatic halos},
  author={Lasersohn, Peter},
  journal={Language},
  pages={522--551},
  year={1999},
  publisher={JSTOR}
}

@book{krifka2007approximate,
  title={Approximate interpretation of number words},
  author={Krifka, Manfred},
  year={2007},
  publisher={Humboldt-Universit{\"a}t zu Berlin, Philosophische Fakult{\"a}t II}
}

@book{carston2008thoughts,
  title={Thoughts and utterances: The pragmatics of explicit communication},
  author={Carston, Robyn},
  year={2008},
  publisher={John Wiley \& Sons}
}

@book{clark1996using,
  title={Using language},
  author={Clark, Herbert H},
  year={1996},
  publisher={Cambridge university press}
}

@book{sperber1986relevance,
  title={Relevance: Communication and cognition},
  author={Sperber, Dan and Wilson, Deirdre},
  volume={142},
  year={1986},
  publisher={Harvard University Press Cambridge, MA}
}

@book{oaksford2007bayesian,
  title={Bayesian rationality: The probabilistic approach to human reasoning},
  author={Oaksford, Mike and Chater, Nick},
  year={2007},
  publisher={Oxford University Press}
}

@article{cadoli1993survey,
  title={A survey of complexity results for non-monotonic logics},
  author={Cadoli, Marco and Schaerf, Marco},
  journal={The Journal of Logic Programming},
  volume={17},
  number={2-4},
  pages={127--160},
  year={1993},
  publisher={Elsevier}
}

@phdthesis{mercer1987default,
  title={A default logic approach to the derivation of natural language presuppositions},
  author={Mercer, Robert Ernest},
  year={1987},
  school={University of British Columbia}
}

@article{reiter1980logic,
  title={A logic for default reasoning},
  author={Reiter, Raymond},
  journal={Artificial intelligence},
  volume={13},
  number={1-2},
  pages={81--132},
  year={1980},
  publisher={Elsevier}
}

@book{tomasello2009cultural,
  title={The cultural origins of human cognition},
  author={Tomasello, Michael},
  year={2009},
  publisher={Harvard university press}
}

@book{deacon1998symbolic,
  title={The symbolic species: The co-evolution of language and the brain},
  author={Deacon, Terrence W},
  year={1998},
  publisher={WW Norton \& Company}
}

@book{Zipf49,
  author = {Zipf, George K.},
  publisher = {Addison-Wesley},
  title = {Human Behaviour and the Principle of Least Effort},
  year = 1949
}

@inproceedings{paik-etal-2021-world,
    title = "{T}he {W}orld of an {O}ctopus: {H}ow {R}eporting {B}ias {I}nfluences a {L}anguage {M}odel{'}s {P}erception of {C}olor",
    author = "Paik, Cory  and
      Aroca-Ouellette, St{\'e}phane  and
      Roncone, Alessandro  and
      Kann, Katharina",
    editor = "Moens, Marie-Francine  and
      Huang, Xuanjing  and
      Specia, Lucia  and
      Yih, Scott Wen-tau",
    booktitle = "Proceedings of the 2021 Conference on Empirical Methods in Natural Language Processing",
    month = nov,
    year = "2021",
    address = "Online and Punta Cana, Dominican Republic",
    publisher = "Association for Computational Linguistics",
    url = "https://aclanthology.org/2021.emnlp-main.63/",
    doi = "10.18653/v1/2021.emnlp-main.63",
    pages = "823--835"
}

@article{shannon1959coding,
  title={Coding theorems for a discrete source with a fidelity criterion},
  author={Shannon, Claude E and others},
  journal={IRE Nat. Conv. Rec},
  volume={4},
  number={142-163},
  pages={1},
  year={1959}
}

@article{shannon1948mathematical,
  title={A mathematical theory of communication},
  author={Shannon, Claude E},
  journal={The Bell system technical journal},
  volume={27},
  number={3},
  pages={379--423},
  year={1948},
  publisher={Nokia Bell Labs}
}

@inproceedings{zeng2022sketchedit,
  title={Sketchedit: Mask-free local image manipulation with partial sketches},
  author={Zeng, Yu and Lin, Zhe and Patel, Vishal M},
  booktitle={Proceedings of the IEEE/CVF conference on computer vision and pattern recognition},
  pages={5951--5961},
  year={2022}
}

@inproceedings{jiang2023ai,
  title={AI Art and its Impact on Artists},
  author={Jiang, Harry H and Brown, Lauren and Cheng, Jessica and Khan, Mehtab and Gupta, Abhishek and Workman, Deja and Hanna, Alex and Flowers, Johnathan and Gebru, Timnit},
  booktitle={Proceedings of the 2023 AAAI/ACM Conference on AI, Ethics, and Society},
  pages={363--374},
  year={2023}
}

@article{anantrasirichai2022artificial,
  title={Artificial intelligence in the creative industries: a review},
  author={Anantrasirichai, Nantheera and Bull, David},
  journal={Artificial intelligence review},
  volume={55},
  number={1},
  pages={589--656},
  year={2022},
  publisher={Springer}
}

@inproceedings{tang2024exploring,
  title={Exploring the impact of AI-generated image tools on professional and non-professional users in the art and design fields},
  author={Tang, Yuying and Zhang, Ningning and Ciancia, Mariana and Wang, Zhigang},
  booktitle={Companion Publication of the 2024 Conference on Computer-Supported Cooperative Work and Social Computing},
  pages={451--458},
  year={2024}
}

@article{savoldi2025decade,
  title={A decade of gender bias in machine translation},
  author={Savoldi, Beatrice and Bastings, Jasmijn and Bentivogli, Luisa and Vanmassenhove, Eva},
  journal={Patterns},
  volume={6},
  number={6},
  year={2025},
  publisher={Elsevier}
}

@article{savoldi2021gender,
  title={Gender bias in machine translation},
  author={Savoldi, Beatrice and Gaido, Marco and Bentivogli, Luisa and Negri, Matteo and Turchi, Marco},
  journal={Transactions of the Association for Computational Linguistics},
  volume={9},
  pages={845--874},
  year={2021},
  publisher={MIT Press One Rogers Street, Cambridge, MA 02142-1209, USA journals-info~…}
}

@article{hao2023optimizing,
  title={Optimizing prompts for text-to-image generation},
  author={Hao, Yaru and Chi, Zewen and Dong, Li and Wei, Furu},
  journal={Advances in Neural Information Processing Systems},
  volume={36},
  pages={66923--66939},
  year={2023}
}

@article{partee1995lexical,
  title={Lexical semantics and compositionality},
  author={Partee, Barbara and others},
  journal={An invitation to cognitive science: Language},
  volume={1},
  pages={311--360},
  year={1995},
  publisher={MIT press Cambridge, MA:}
}

@article{pelletier1994principle,
  title={The principle of semantic compositionality},
  author={Pelletier, Francis Jeffry},
  journal={Topoi},
  volume={13},
  number={1},
  pages={11--24},
  year={1994},
  publisher={Springer}
}

@article{gervasi2002lightweight,
  title={Lightweight validation of natural language requirements},
  author={Gervasi, Vincenzo and Nuseibeh, Bashar},
  journal={Software: Practice and Experience},
  volume={32},
  number={2},
  pages={113--133},
  year={2002},
  publisher={Wiley Online Library}
}

@incollection{meyer1993formalism,
  title={On formalism in specifications},
  author={Meyer, Bertrand},
  booktitle={Program Verification: Fundamental Issues in Computer Science},
  pages={155--189},
  year={1993},
  publisher={Springer}
}

@inproceedings{vinker2025drawing,
  title={Drawing and Sketching: Art, Psychology, and Computer Graphics},
  author={Vinker, Yael and Tang, Mia and Hertzmann, Aaron and Fan, Judith Ellen and Agrawala, Maneesh and Chandra, Kartik and Fu, Hongbo and Schaldenbrand, Peter},
  booktitle={Proceedings of the Special Interest Group on Computer Graphics and Interactive Techniques Conference Frontiers},
  pages={1--3},
  year={2025}
}

@misc{chen2021evaluatinglargelanguagemodels,
      title={Evaluating Large Language Models Trained on Code}, 
      author={Mark Chen and Jerry Tworek and Heewoo Jun and Qiming Yuan and Henrique Ponde de Oliveira Pinto and Jared Kaplan and Harri Edwards and Yuri Burda and Nicholas Joseph and Greg Brockman and Alex Ray and Raul Puri and Gretchen Krueger and Michael Petrov and Heidy Khlaaf and Girish Sastry and Pamela Mishkin and Brooke Chan and Scott Gray and Nick Ryder and Mikhail Pavlov and Alethea Power and Lukasz Kaiser and Mohammad Bavarian and Clemens Winter and Philippe Tillet and Felipe Petroski Such and Dave Cummings and Matthias Plappert and Fotios Chantzis and Elizabeth Barnes and Ariel Herbert-Voss and William Hebgen Guss and Alex Nichol and Alex Paino and Nikolas Tezak and Jie Tang and Igor Babuschkin and Suchir Balaji and Shantanu Jain and William Saunders and Christopher Hesse and Andrew N. Carr and Jan Leike and Josh Achiam and Vedant Misra and Evan Morikawa and Alec Radford and Matthew Knight and Miles Brundage and Mira Murati and Katie Mayer and Peter Welinder and Bob McGrew and Dario Amodei and Sam McCandlish and Ilya Sutskever and Wojciech Zaremba},
      year={2021},
      eprint={2107.03374},
      archivePrefix={arXiv},
      primaryClass={cs.LG},
      url={https://arxiv.org/abs/2107.03374}, 
}

@article{frege1951concept,
  title={On concept and object},
  author={Frege, Gottlob and Geach, Peter Thomas and Black, Max},
  journal={Mind},
  volume={60},
  number={238},
  pages={168--180},
  year={1951},
  publisher={JSTOR}
}

@article{hockett1960origin,
  title={The origin of speech},
  author={Hockett, Charles F and Hockett, Charles D},
  journal={Scientific American},
  volume={203},
  number={3},
  pages={88--97},
  year={1960},
  publisher={JSTOR}
}

@misc{dora2025hiddenrisksllmgeneratedweb,
      title={The Hidden Risks of LLM-Generated Web Application Code: A Security-Centric Evaluation of Code Generation Capabilities in Large Language Models}, 
      author={Swaroop Dora and Deven Lunkad and Naziya Aslam and S. Venkatesan and Sandeep Kumar Shukla},
      year={2025},
      eprint={2504.20612},
      archivePrefix={arXiv},
      primaryClass={cs.CR},
      url={https://arxiv.org/abs/2504.20612}, 
}

@article{agostinelli2023musiclm,
  title={Musiclm: Generating music from text},
  author={Agostinelli, Andrea and Denk, Timo I and Borsos, Zal{\'a}n and Engel, Jesse and Verzetti, Mauro and Caillon, Antoine and Huang, Qingqing and Jansen, Aren and Roberts, Adam and Tagliasacchi, Marco and others},
  journal={arXiv preprint arXiv:2301.11325},
  year={2023}
}

@inproceedings{liu2022design,
  title={Design guidelines for prompt engineering text-to-image generative models},
  author={Liu, Vivian and Chilton, Lydia B},
  booktitle={Proceedings of the 2022 CHI conference on human factors in computing systems},
  pages={1--23},
  year={2022}
}

@article{oppenlaender2024taxonomy,
  title={A taxonomy of prompt modifiers for text-to-image generation},
  author={Oppenlaender, Jonas},
  journal={Behaviour \& Information Technology},
  volume={43},
  number={15},
  pages={3763--3776},
  year={2024},
  publisher={Taylor \& Francis}
}

@inproceedings{don-yehiya-etal-2023-human,
    title = "Human Learning by Model Feedback: The Dynamics of Iterative Prompting with Midjourney",
    author = "Don-Yehiya, Shachar  and
      Choshen, Leshem  and
      Abend, Omri",
    editor = "Bouamor, Houda  and
      Pino, Juan  and
      Bali, Kalika",
    booktitle = "Proceedings of the 2023 Conference on Empirical Methods in Natural Language Processing",
    month = dec,
    year = "2023",
    address = "Singapore",
    publisher = "Association for Computational Linguistics",
    url = "https://aclanthology.org/2023.emnlp-main.253/",
    doi = "10.18653/v1/2023.emnlp-main.253",
    pages = "4146--4161"
}

@article{eshraghian2025ai,
  title={AI in software programming: understanding emotional responses to GitHub Copilot},
  author={Eshraghian, Farjam and Hafezieh, Najmeh and Farivar, Farveh and de Cesare, Sergio},
  journal={Information Technology \& People},
  volume={38},
  number={4},
  pages={1659--1685},
  year={2025},
  publisher={Emerald Publishing Limited}
}

@article{jimenez2023swe,
  title={Swe-bench: Can language models resolve real-world github issues?},
  author={Jimenez, Carlos E and Yang, John and Wettig, Alexander and Yao, Shunyu and Pei, Kexin and Press, Ofir and Narasimhan, Karthik},
  journal={arXiv preprint arXiv:2310.06770},
  year={2023}
}

@book{partee2012mathematical,
  title={Mathematical methods in linguistics},
  author={Partee, Barbara BH and Ter Meulen, Alice G and Wall, Robert},
  volume={30},
  year={2012},
  publisher={Springer Science \& Business Media}
}

@article{gibson2019efficiency,
  title={How efficiency shapes human language},
  author={Gibson, Edward and Futrell, Richard and Piantadosi, Steven P and Dautriche, Isabelle and Mahowald, Kyle and Bergen, Leon and Levy, Roger},
  journal={Trends in cognitive sciences},
  volume={23},
  number={5},
  pages={389--407},
  year={2019},
  publisher={Elsevier}
}

@book{cover1999elements,
  title={Elements of information theory},
  author={Cover, Thomas M},
  year={1999},
  publisher={John Wiley \& Sons}
}

@article{barke2023grounded,
  title={Grounded copilot: How programmers interact with code-generating models},
  author={Barke, Shraddha and James, Michael B and Polikarpova, Nadia},
  journal={Proceedings of the ACM on Programming Languages},
  volume={7},
  number={OOPSLA1},
  pages={85--111},
  year={2023},
  publisher={ACM New York, NY, USA}
}

@article{jiang2024survey,
  title={A survey on large language models for code generation},
  author={Jiang, Juyong and Wang, Fan and Shen, Jiasi and Kim, Sungju and Kim, Sunghun},
  journal={arXiv preprint arXiv:2406.00515},
  year={2024}
}

@article{joel2024survey,
  title={A survey on llm-based code generation for low-resource and domain-specific programming languages},
  author={Joel, Sathvik and Wu, Jie and Fard, Fatemeh},
  journal={ACM Transactions on Software Engineering and Methodology},
  year={2024},
  publisher={ACM New York, NY}
}

@article{zhang2025systematic,
  title={A systematic survey of text summarization: From statistical methods to large language models},
  author={Zhang, Haopeng and Yu, Philip S and Zhang, Jiawei},
  journal={ACM Computing Surveys},
  volume={57},
  number={11},
  pages={1--41},
  year={2025},
  publisher={ACM New York, NY}
}

@article{lerch2025survey,
  title={Survey on the evaluation of generative models in music},
  author={Lerch, Alexander and Arthur, Claire and Bryan-Kinns, Nick and Ford, Corey and Sun, Qianyi and Vinay, Ashvala},
  journal={ACM Computing Surveys},
  volume={58},
  number={4},
  pages={1--36},
  year={2025},
  publisher={ACM New York, NY}
}

@misc{becker2025measuringimpactearly2025ai,
      title={Measuring the Impact of Early-2025 AI on Experienced Open-Source Developer Productivity}, 
      author={Joel Becker and Nate Rush and Elizabeth Barnes and David Rein},
      year={2025},
      eprint={2507.09089},
      archivePrefix={arXiv},
      primaryClass={cs.AI},
      url={https://arxiv.org/abs/2507.09089}, 
}

@article{hindle2016naturalness,
  title={On the naturalness of software},
  author={Hindle, Abram and Barr, Earl T and Gabel, Mark and Su, Zhendong and Devanbu, Premkumar},
  journal={Communications of the ACM},
  volume={59},
  number={5},
  pages={122--131},
  year={2016},
  publisher={ACM New York, NY, USA}
}

@article{goodman2016pragmatic,
  title={Pragmatic language interpretation as probabilistic inference},
  author={Goodman, Noah D and Frank, Michael C},
  journal={Trends in cognitive sciences},
  volume={20},
  number={11},
  pages={818--829},
  year={2016},
  publisher={Elsevier}
}

@article{frank2012predicting,
  title={Predicting pragmatic reasoning in language games},
  author={Frank, Michael C and Goodman, Noah D},
  journal={Science},
  volume={336},
  number={6084},
  pages={998--998},
  year={2012},
  publisher={American Association for the Advancement of Science}
}

@article{kemp2012kinship,
  title={Kinship categories across languages reflect general communicative principles},
  author={Kemp, Charles and Regier, Terry},
  journal={Science},
  volume={336},
  number={6084},
  pages={1049--1054},
  year={2012},
  publisher={American Association for the Advancement of Science}
}

@inproceedings{Zaslavsky2017,
	Author = {Zaslavsky, Noga and Kemp, Charles and Regier, Terry and Tishby, Naftali},
	Title = {Efficient Human-Like Semantic Representations via the {I}nformation {B}ottleneck Principle},
	Booktitle = {Cognitively Informed {AI} workshop at the 31st Conference on Neural Information Processing Systems ({NIPS})},
	Url = {https://arxiv.org/pdf/1808.03353.pdf},
	Year = {2017}
	}

@article{marzen2017evolution,
  title={The evolution of lossy compression},
  author={Marzen, Sarah E and DeDeo, Simon},
  journal={Journal of The Royal Society Interface},
  volume={14},
  number={130},
  pages={20170166},
  year={2017},
  publisher={The Royal Society}
}

@article{roberts2012information,
  title={Information structure: Towards an integrated formal theory of pragmatics},
  author={Roberts, Craige},
  journal={Semantics and pragmatics},
  volume={5},
  pages={6--1},
  year={2012}
}

@article{grice1975logic,
  title={Logic and conversation},
  author={Grice, Herbert Paul},
  journal={Syntax and semantics},
  volume={3},
  pages={43--58},
  year={1975}
}

@article{degen2023rational,
  title={The rational speech act framework},
  author={Degen, Judith},
  journal={Annual Review of Linguistics},
  volume={9},
  number={1},
  pages={519--540},
  year={2023},
  publisher={Annual Reviews}
}

@article{piantadosi2012communicative,
  title={The communicative function of ambiguity in language},
  author={Piantadosi, Steven T and Tily, Harry and Gibson, Edward},
  journal={Cognition},
  volume={122},
  number={3},
  pages={280--291},
  year={2012},
  publisher={Elsevier}
}

@article{ferreira2008ambiguity,
  title={Ambiguity, accessibility, and a division of labor for communicative success},
  author={Ferreira, Victor S},
  journal={Psychology of Learning and motivation},
  volume={49},
  pages={209--246},
  year={2008},
  publisher={Elsevier}
}

\newpage
\appendix
\onecolumn
\section{Imperatives and Autonomy}\label{appendix:autonomy}
Commands and requests highlight the role of listener autonomy in underspecification. \citet{sperber1986relevance} argue that communication is not passive decoding but involves active inference about relevance to the listener's goals, emphasizing the listener as an agent. \citet{barker2010free} models choice in imperatives (e.g., ``take an apple or an orange'') with logic that accounts for the listener's resources and preferences.

Autonomy in communication can be understood through the ladder of specification: \emph{Strong autonomy} corresponds to indifferent underspecification: the speaker genuinely does not care which option the listener chooses and grants full discretion. \emph{Weak autonomy} corresponds to pragmatic underspecification with delegation: the speaker has implicit preferences or trusts the listener to act in the speaker's interest, but does not explicitly constrain the choice.
This distinction has practical implications for human-AI interaction: systems should distinguish between cases where user indifference grants genuine autonomy versus cases where the system should infer and follow implicit user preferences.

\section{Proof for Lemma \ref{lem:jsd}}\label{appendix:proof}
\begin{proof}
Let $\mathcal{D} = \{p_1, p_2, \ldots, p_n\}$ be a set of $n$ domain distributions, and let $\bar{p} = \frac{1}{n}\sum_{i=1}^{n} p_i$ denote the centroid (average) distribution. We assume a uniform distribution over $\mathcal{D}$.

The Jensen-Shannon divergence of the domain set is defined as:
\begin{align}
    \text{JSD}(\mathcal{D}) = H(\bar{p}) - \frac{1}{n}\sum_{i=1}^{n} H(p_i)
\end{align}
where $H(\cdot)$ denotes the Shannon entropy.

We begin by expanding the expected KL divergence:
\begin{align}
    \mathbb{E}_{p \in \mathcal{D}}[D_{\text{KL}}(p \| p_{\text{NL}})] &= \frac{1}{n}\sum_{i=1}^{n} D_{\text{KL}}(p_i \| p_{\text{NL}}) \nonumber \\
    &= \frac{1}{n}\sum_{i=1}^{n} \left( -H(p_i) - \sum_x p_i(x) \log p_{\text{NL}}(x) \right)
\end{align}

Rearranging terms:
\begin{align}
    &= -\frac{1}{n}\sum_{i=1}^{n} H(p_i) - \frac{1}{n}\sum_{i=1}^{n} \sum_x p_i(x) \log p_{\text{NL}}(x) \nonumber \\
    &= -\frac{1}{n}\sum_{i=1}^{n} H(p_i) - \sum_x \left(\frac{1}{n}\sum_{i=1}^{n} p_i(x)\right) \log p_{\text{NL}}(x) \nonumber \\
    &= -\frac{1}{n}\sum_{i=1}^{n} H(p_i) - \sum_x \bar{p}(x) \log p_{\text{NL}}(x)
\end{align}

Now, we add and subtract $H(\bar{p})$:
\begin{align}
    &= H(\bar{p}) - \frac{1}{n}\sum_{i=1}^{n} H(p_i) - H(\bar{p}) - \sum_x \bar{p}(x) \log p_{\text{NL}}(x) \nonumber \\
    &= \text{JSD}(\mathcal{D}) + \left( -H(\bar{p}) - \sum_x \bar{p}(x) \log p_{\text{NL}}(x) \right) \nonumber \\
    &= \text{JSD}(\mathcal{D}) + D_{\text{KL}}(\bar{p} \| p_{\text{NL}})
\end{align}

which completes the proof.
\end{proof}

\end{document}